
\documentclass{article}

\usepackage{microtype}
\usepackage{graphicx}
\usepackage{subfigure}
\usepackage{booktabs} 
\usepackage{multirow}

\usepackage{hyperref}



\usepackage[accepted]{icml2023}

\usepackage{amsmath}
\usepackage{amssymb}
\usepackage{mathtools}
\usepackage{amsthm}
\usepackage{xcolor,colortbl}

\usepackage[capitalize,noabbrev]{cleveref}

\theoremstyle{plain}
\newtheorem{theorem}{Theorem}[section]
\newtheorem{proposition}[theorem]{Proposition}

\theoremstyle{definition}

\theoremstyle{remark}

\DeclarePairedDelimiterX{\infdivx}[2]{(}{)}{%
  #1\;\delimsize\|\;#2%
}
\newcommand{\infdiv}{D_{KL}\infdivx}

\usepackage[textsize=tiny]{todonotes}
\definecolor{mygray}{gray}{0.9}
\newcommand{\shade}{\cellcolor{mygray}}

\icmltitlerunning{On genuine invariance learning without weight-tying}

\begin{document}

\twocolumn[
\icmltitle{On genuine invariance learning without weight-tying}




\begin{icmlauthorlist}
\icmlauthor{Artem Moskalev}{delta}
\icmlauthor{Anna Sepliarskaia}{twente}
\icmlauthor{Erik J. Bekkers}{uva}
\icmlauthor{Arnold Smeulders}{uva}
\end{icmlauthorlist}

\icmlaffiliation{delta}{UvA-Bosch Delta Lab, University of Amsterdam, The Netherlands}
\icmlaffiliation{twente}{University of Twente \& Booking.com, The Netherlands}
\icmlaffiliation{uva}{University of Amsterdam, The Netherlands}

\icmlcorrespondingauthor{Artem Moskalev}{ammoskalevartem@gmail.com}

\icmlkeywords{Machine Learning, ICML}

\vskip 0.3in
]



\printAffiliationsAndNotice{} 

\begin{abstract}
In this paper, we investigate properties and limitations of invariance learned by neural networks from the data compared to the genuine invariance achieved through invariant weight-tying. To do so, we adopt a group theoretical perspective and analyze invariance learning in neural networks without weight-tying constraints. We demonstrate that even when a network learns to correctly classify samples on a group orbit, the underlying decision-making in such a model does not attain genuine invariance. Instead, learned invariance is strongly conditioned on the input data, rendering it unreliable if the input distribution shifts. We next demonstrate how to guide invariance learning toward genuine invariance by regularizing the invariance of a model at the training. To this end, we propose several metrics to quantify learned invariance: \textit{(i)} predictive distribution invariance, \textit{(ii)} logit invariance, and \textit{(iii)} saliency invariance similarity. We show that the invariance learned with the invariance error regularization closely reassembles the genuine invariance of weight-tying models and reliably holds even under a severe input distribution shift. Closer analysis of the learned invariance also reveals the spectral decay phenomenon, when a network chooses to achieve the invariance to a specific transformation group by reducing the sensitivity to \textit{any} input perturbation. GitHub: \url{https://github.com/amoskalev/ginvariance}
\end{abstract}

\section{Introduction}

\begin{figure}[t!]
  \centering
    \includegraphics[width=0.95\linewidth]{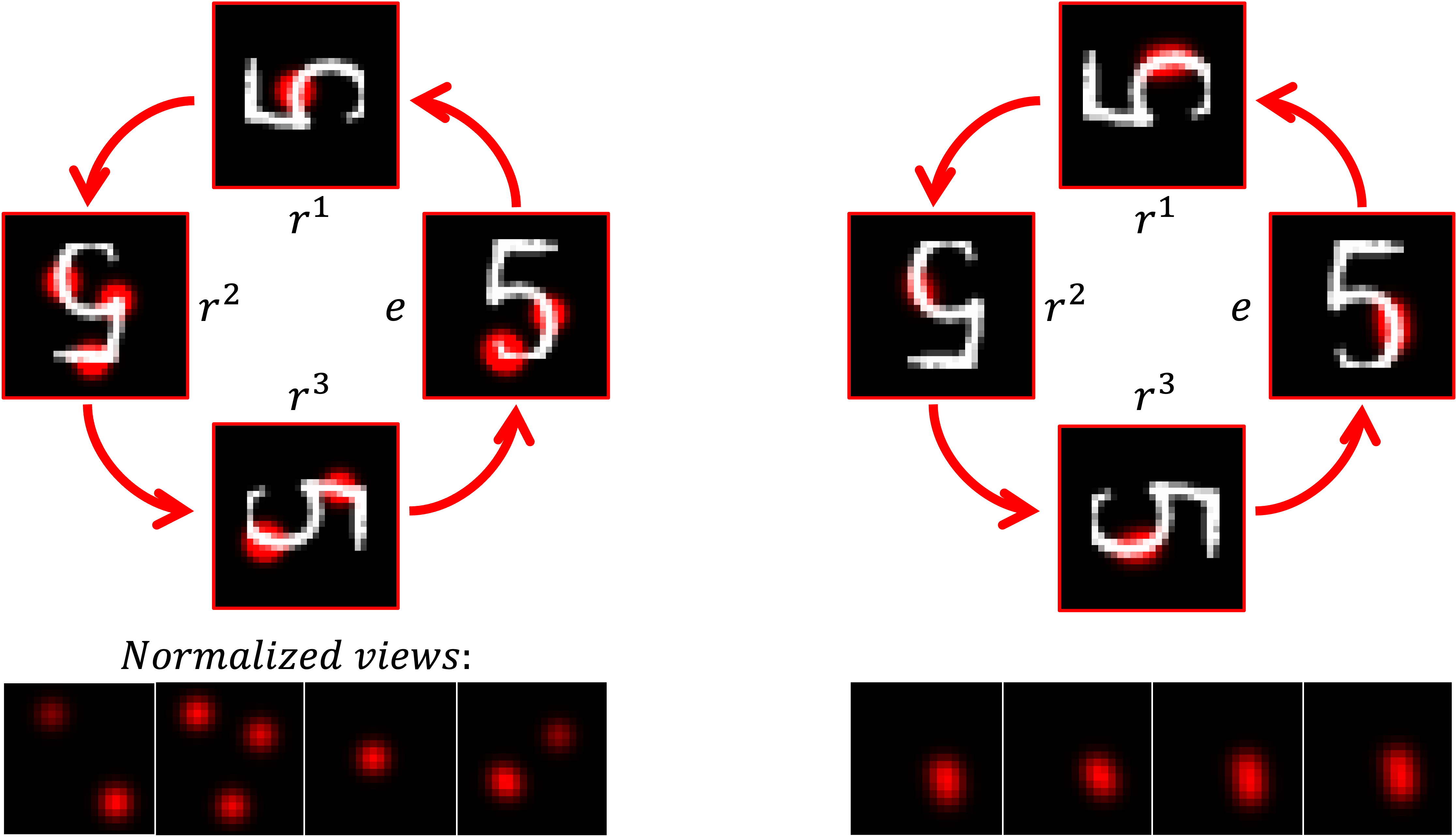}
    \caption{\textbf{Left:} The network learns a separate set of features for each of the orientations as indicated by divergent saliency maps. \textbf{Right:} saliency maps of the networks with group-invariant weight-tying. \textit{In both cases, predicted class distributions are identical for all of the orientations.} \textbf{Bottom row:} saliency maps normalized to a common orientation.}
  \label{fig:fig1}
\end{figure}

The ability to abstract from irrelevant details and focus on core aspects is a foundational property of intelligent systems. Invariance, a crucial step of this abstraction process, enables neural networks to recognize patterns regardless of their transformations. Achieving effective invariance is vital for the robust performance of deep learning models.

There exist two approaches for invariance in neural networks: invariant weight-tying and learning invariance from data. Networks with built-in invariant weight-tying \cite{cohen16gcnn,worrall2017harmonic,e2cnn,worrall2019deep,sosnovik2020sesn,bekkers2020bspline} offer \textit{genuine} invariance, but require knowledge of geometrical priors and incurs high computational and memory costs \cite{sosnovik2021disco,sosnovik2021transform}. Alternatively, neural networks can learn invariance directly from data. Recent works \cite{olah2020naturally,benton2020learning,moskalev2022liegg} demonstrate that neural networks successfully learn invariant priors without any architectural modifications. However, the nature of learned invariance remains largely unexplored, particularly regarding whether it resembles the genuine invariance of weight-tying methods at any level. Consequently, this raises concerns about how much we can rely on the learned invariance when operating conditions evolve. In this work, we investigate the properties of learned invariance to better understand its potential and limitations.

To investigate properties of the learned invariance, we adopt the group theoretical perspective and analyze invariance learning without weight-tying constraints. Firstly, we analyze the saliency maps of no weight-tying networks with learned invariance. We demonstrate that even when such networks learn to correctly classify samples on a group orbit, the underlying decision-making process does not attain genuine invariance, see Figure \ref{fig:fig1}. Instead of learning genuinely invariant weight-tying, unconstrained networks choose to learn a separate set of features for each of the transformations from a group orbit, even when invariance is enforced by strong data augmentation. This results in learned invariance being strongly conditioned on the input data. Consequently, the effectiveness of learned invariance degrades rapidly when operating conditions evolve, e.g. under input distribution shift. This renders neural networks with learned invariance less reliable.

Secondly, we tackle the problem of aligning learned invariance with the genuine invariance of weight-tying networks. To do so, we propose several measures to quantify the invariance error; we next use those measures to regularize the task loss to promote genuine invariance learning. We conduct experiments with rotation and translation groups, and we show that the proposed regularization significantly aligns learned invariance with the genuine invariance achieved through the weight-tying. However, the alignment also induces performance decay on a downstream task. This presents a new challenging problem of achieving genuine invariance by learning through data augmentation and specialized losses, while also maintaining the downstream task performance.

Thirdly, we investigate the performance decay under the learned invariance. To this end, we analyze the training dynamics of the invariance error minimization from the perspective of the gradient flow. We show that minimizing the invariance error without weight-tying implicitly promotes attaining the invariance to a certain group of transformations by reducing the sensitivity to \textit{any} input perturbation. This has an effect similar to training a network with a large weight decay, which motivates the performance drop. We conduct experiments and demonstrate that this phenomenon holds for various transformations and various forms of invariance error minimization.

To sum up, we make the following contributions:
\begin{itemize}
    \item We demonstrate that data-driven invariance learning fails to learn genuine invariance as in weight-tying networks.
    \item We show that it is possible to attain genuine invariance through invariance regularization, but at the cost of the downstream task performance.
    \item We attribute the performance decay under learned invariance to the training dynamics of the invariance error minimization, which constrains the sensitivity of a network to input perturbations in general.
\end{itemize}
\section{Related work}

\paragraph{Weight-tying invariance} Weight-tying is the approach for invariance in neural networks that is based on the concept of group equivariant networks \cite{cohen16gcnn}. Group equivariant networks explicitly embed equivariance, or invariance as a special case, for specific transformation groups into a network architecture. The principle traces back to convolutional networks \cite{LeCun1999cnn} which incorporate translation symmetry. The scope of equivariant networks has since expanded to include other transformations such as rotations \cite{cohen16gcnn,worrall2017harmonic,e2cnn,jenner2022steerable}, permutations \cite{Zaheer2017deepsets}, and scaling \cite{worrall2019deep,sosnovik2020sesn,bekkers2020bspline,Sosnovik_2021_WACV,sosnovik2021disco,sosnovik2021transform}. Another line of work focuses on advancing group equivariant networks by enabling them to learn symmetries directly from the data \cite{anselmi2019symmetry,zhou2020meta,dehmamy2021automatic,sanborn2023bispectral}. This allows the model to adjust to specific symmetries present in the training dataset, eliminating the need for prior knowledge of geometrical priors. Yet, these methods still require modifying the architecture to train invariance.

In this work, we treat the weight-tying methods as oracle invariance learners and investigate whether networks without specific architectural modifications can learn the degree and quality of invariance comparable to the weight-tying approaches.

\paragraph{Data-driven invariance learning} Another approach for achieving invariance is to learn it directly from the data. Recent and earlier works \cite{Goodfellow2009Measuring,Lenc2014understanding,benton2020learning,moskalev2022liegg,kvinge2022in} demonstrate that neural networks can learn invariance without relying on specialized architectural modifications. Additionally, training with data augmentation has long been seen as a method to increase invariance of a model for input transformations \cite{perez2017effectiveness,shorten2019survey,cubuk2018autoaugment}. Invariance learning that does not require specialized architectural modification is advantageous as it does not incur additional memory or computational costs. However, the nature of the learned invariance and its comparability to the genuine invariance obtained through the weight-tying remains an open question. The properties and reliability of such learned invariance are not well understood, which motivates the study in this paper.
\section{Learning invariances from data}

We take a group-symmetry perspective on data-driven invariance learning when the downstream task is classification. That is to say, we define a set of transformations to be a symmetry group a network needs to learn to be invariant to when classifying input signals. We start by briefly introducing group symmetry and invariance.

\subsection{Group symmetry}

\paragraph{Group} A group $\langle \mathcal{G}, \circ \rangle$ is a set $\mathcal{G}$ with a group binary operation $\circ$ called the group product. For convenience, it is common to simplify the notation $a \circ b$ to $ab$. The group product combines two elements from $\mathcal{G}$ to a new element so that the following group axioms are satisfied. \textit{Closure:} for all $a,b \in \mathcal{G}$, the element $ab \in \mathcal{G}$. \textit{Associativity:} for all $a,b,c \in \mathcal{G}$, $(ab)c = a(bc)$. \textit{Identity:} there is an element $e \in \mathcal{G}$ such that $ea = ae = a$ for every element $a \in \mathcal{G}$. \textit{Inverse:} for each $a \in \mathcal{G}$ there exist $a^{-1} \in \mathcal{G}$ such that $a^{-1}a = aa^{-1} = e$. 

\paragraph{Group actions \& Symmetry} Group actions are a way of describing symmetries of objects using groups. A group action of a group $\mathcal{G}$ on a set $\mathcal{X}$ maps each element $g \in \mathcal{G}$ and each element $x \in \mathcal{X}$ to an element of $\mathcal{X}$ in a way that is compatible with the group structure. In other words, $ex = x$ and $(g_1 g_2) x = g_1 (g_2 x)$ for any $x \in \mathcal{X}$ and $g_1, g_2 \in \mathcal{G}$.

\paragraph{Group-invariance} Group-invariance is a property of a function $f: \mathcal{X} \rightarrow \mathcal{Y}$ under a group action from a group $\mathcal{G}$. A function $f$ is said to be group-invariant if $f(gx) = f(x)$ for $g \in \mathcal{G}$. This means that the value of $f$ at $x$ is unchanged by the action of any group element.

\paragraph{Group orbit} The group orbit of an element $x \in \mathcal{X}$ under a group action from a group $\mathcal{G}$ is the set of all points in $\mathcal{X}$ that can be reached by applying the group action on $x$. More formally, the orbit of $x$ is defined as the set $\mathcal{O}_x = \{gx | g \in \mathcal{G}\}$. This concept encapsulates the idea that the group action can move the element $x$ around within the set, and the orbit describes all the possible positions $x$ can be moved to by the group action.

\subsection{Measuring learned invariance}

Next, we explain how to measure group-invariance learned by a neural networks from the data. We assume we are given a neural network $f: \mathcal{X} \rightarrow \mathcal{Y}$ that maps inputs to logits, a group $\mathcal{G}$ and the dataset $\mathcal{D}$. We define three types of measures: \textit{(i) predictive distribution invariance} to measure the average change of a network's output distribution when a symmetry transformation is applied, \textit{(ii) logit invariance} to measure the change of raw network's logits and \textit{(iii) saliency invariance similarity} to evaluate the consistency of network's decisions under group transformations.

\paragraph{Predictive distribution invariance} Since the downstream task of interest is classification, it is natural to measure the invariance by evaluating the shift of the predictive distribution when transformations from a group orbit are applied. Practically, we can utilize Kullback–Leibler divergence between output softmax-distributions of $f(x)$ and $f(gx)$. With this, we can write the predictive distribution invariance error $DI_f$:

\begin{equation}
\label{eq:distr_inv}
    DI_f(\mathcal{D}, \mathcal{G}) = \sum_{x \sim \mathcal{D}} \sum_{g \sim \mathcal{G}} \infdiv{u_{x}}{q_{gx}}
\end{equation}

where $u_x$ and $q_{gx}$ denote the \textit{softmax} applied to the logits $f(x)$ and $f(gx)$ respectively.

Since $DI_f$ operates directly on the level of predictive distributions, it is the most useful to evaluate the invariance tackled to the downstream classification task. 

\paragraph{Logit invariance} Next, we define the logit invariance error to measure the shift of raw logits under group actions. Practically, we utilize average squared $L_2$ distance between the logits $f(x)$ and $f(gx)$:

\begin{equation}
\label{eq:logit_inv}
    LI_f(\mathcal{D}, \mathcal{G}) = \sum_{x \sim \mathcal{D}} \sum_{g \sim \mathcal{G}} \frac{1}{2} \big{\|} f(x) - f(gx) \big{\|}_{2}^{2}
\end{equation}

Note that the logit invariance error is a more strict invariance measure compared to $DI_f(\mathcal{D}, \mathcal{G})$. This is due to a scalar addition invariance of the predictive softmax-distribution. That means $LI_f(\mathcal{D}, \mathcal{G}) = 0$ implies $DI_f(\mathcal{D}, \mathcal{G}) = 0$, but not vice versa. With this, the logit invariance error is the most useful to characterize the absolute invariance of a function to group transformations regardless of a particular downstream task.

\paragraph{Saliency invariance similarity} Lastly, we propose saliency invariance similarity $SI_f$ to measure the consistency of the decision-making process of a neural network under input transformations. Let $m_f: \mathcal{X} \times \mathcal{Y} \rightarrow \mathcal{S}$ be a saliency map function for the network $f$ \cite{sundararajan2017axiomatic,mundhenk2019efficient}. We then compute the similarity between $m_{f}(x)$ and $g^{-1} m_{f}(gx)$, where $g^{-1}$ is needed to ensure a common orientation of the saliency maps. Practically, we adopt the cosine similarity and compute the average saliency similarity as:

\begin{equation}
\label{eq:saliency_inv}
SI_f(\mathcal{D}, \mathcal{G}) = \sum_{x \sim \mathcal{D}} \sum_{g \sim \mathcal{G}} \frac{m_{f}(x) \cdot g^{-1}m_{f}(gx)}{\| m_{f}(x) \|_2 \| g^{-1}m_{f}(gx) \|_2}
\end{equation}

The saliency invariance similarity $SI_f(\mathcal{D}, \mathcal{G})$ reflects how much the direction of the most important features, that a network bases its decisions on, change under transformations from a group orbit. Saliency invariance similarity differs from the previous two metrics as it considers the structure of the input data and not just the output of the network. This makes $SI_f$ particularly useful to understand how group transformations alter a network's internal decision-making process.

\subsection{Invariance regularization}

\paragraph{Constrained invariance learning} We next consider the task of facilitating learning invariances from the data. The natural way to do so is to optimize the task performance subject to a low invariance error. Practically, with the dataset $\mathcal{D}$ and a group of interest $\mathcal{G}$, invariance learning boils down to the constrained optimization approach $\text{min}_{\theta} \textit{ } \mathcal{L}_f(\mathcal{D}) \textit{ s.t. } {I}_f(\mathcal{D}, \mathcal{G})=0$; then, to train a neural network, we can simply optimize the relaxation:

\begin{equation}
    \text{min}_{\theta} \textit{ } \mathcal{L}_f(\mathcal{D}) + \nu {I}_f(\mathcal{D}, \mathcal{G})
\end{equation}

where $\theta$ denotes the parameters of $f$, $\mathcal{L}_f$ is a downstream task loss functions, ${I}_f(\mathcal{D}, \mathcal{G})$ is an invariance regularizer with respect to the group $\mathcal{G}$ and $\nu$ regulates how much invariance we want to achieve at the training. Practically, we observed that using the logit invariance error as a regularizer provides better overall invariance and accuracy than other forms of the invariance error, see Section \ref{sec:sdecay}.

Adding an invariance-regularizer to the original loss yields a simple approach to facilitate data-driven invariance learning. We experimentally demonstrate that invariance regularization significantly improves the quality of learned invariance, closing the gap with the genuine invariance of weight-tying methods. However, we also observe that the improvement in the quality of invariance comes at the cost of downstream task performance, as we demonstrate in Section \ref{sec:sdecay}.

\paragraph{Invariance-induced spectral decay} In order to analyze the causes of performance decay under the invariance error minimization, we analyze the training dynamics of the learned invariance through the lens of its gradient flow. We show that a neural network opts for achieving the invariance to a particular transform group by reducing the sensitivity to any input variations. We use a maximum singular value $\sigma_{\text{max}}$ of network's weights as a sensitivity measure \cite{yoshida2017spectral,Khrulkov_2018_CVPR}; and we analyze the gradient flow for the logit invariance error with a class of linear neural networks. We firstly show that the logit invariance error minimization implicitly constrains the maximum singular value of network's weights, thereby reducing its input sensitivity. Then, we experimentally demonstrate that this result also holds for more complex neural networks and the various forms of invariance errors.

\vspace{7mm}

Consider a linear neural network $h(x) = Wx$. Without loss of generality, we analyze the sensitivity to the action of a single group element $g$, instead of the full orbit of the group $\mathcal{G}$. Let $G$ be a linear representation of the group acting on $x$ and consider invariance error minimization over $t$ steps.

\begin{proposition}[\textit{Invariance-induced spectral decay}]
\label{Ipq_theorem}
Logit invariance error minimization implies $\sigma_{\text{max}}(W(t)) \leq \sigma_{\text{max}}(W(0))$ when $t \rightarrow \infty$.
\end{proposition}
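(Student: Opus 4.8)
The plan is to exploit the linearity of $h$ to reduce the gradient flow of the logit invariance error to a linear matrix ODE that integrates in closed form, and then read off the spectral-norm bound. First I would specialize the logit invariance error \eqref{eq:logit_inv} to the single-element case. Writing the action as $gx = Gx$ and $h(x) = Wx$, the error becomes
\[
LI = \tfrac{1}{2}\sum_{x \sim \mathcal{D}} \norm{W(I-G)x}_2^2 = \tfrac{1}{2}\,\mathrm{tr}\!\big(W A W^{\top}\big),
\]
where $A = (I-G)\,\Sigma\,(I-G)^{\top}$ and $\Sigma = \sum_{x} x x^{\top}$ is the (unnormalized) data covariance. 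The crucial structural observation is that $A$ is symmetric and positive semidefinite by construction, since it has the form $B\Sigma B^{\top}$ with $B = I-G$; this is what drives the entire argument.

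Next I would compute the gradient and set up the gradient flow. Differentiating the quadratic form and using the symmetry of $A$ gives $\nabla_W LI = WA$, so the continuous-time dynamics read $\dot{W} = -WA$. Because this is linear in $W$ with a constant matrix $A$, it integrates exactly to $W(t) = W(0)\,e^{-At}$.

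The final step is the spectral estimate. Since $A$ is symmetric positive semidefinite, $e^{-At}$ is symmetric with eigenvalues $e^{-\lambda_i t}$, each $\lambda_i \geq 0$; hence its operator norm satisfies $\norm{e^{-At}}_2 = e^{-\lambda_{\min}(A)\,t} \leq 1$ for every $t \geq 0$. Submultiplicativity of the spectral norm then yields
\[
\sigma_{\text{max}}(W(t)) = \norm{W(0)\,e^{-At}}_2 \leq \norm{W(0)}_2\,\norm{e^{-At}}_2 \leq \sigma_{\text{max}}(W(0)),
\]
which holds for all $t \geq 0$, and in particular as $t \rightarrow \infty$. Applying the same estimate on each subinterval shows further that $\sigma_{\text{max}}(W(t))$ is in fact monotonically non-increasing, a slightly stronger conclusion than stated.

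I do not expect a serious analytic obstacle here; the proof is short once the linearity is used. The two points that require care are: \emph{(i)} verifying that $A$ is positive semidefinite — this is immediate from the $B\Sigma B^{\top}$ form and does not rely on $G$ being orthogonal, and it remains true if one sums over the full orbit rather than a single $g$, since a sum of such terms is again PSD; and \emph{(ii)} remembering that this is a spectral-norm statement rather than a claim about each individual singular value, since the singular values of a product are not the products of singular values. Submultiplicativity is precisely the tool that sidesteps that issue and suffices for the bound.
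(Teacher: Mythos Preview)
Your proposal is correct and follows essentially the same route as the paper: write the gradient flow as $\dot W = -W\,(\text{PSD matrix})$, integrate to a matrix exponential, and bound via submultiplicativity using $\norm{e^{-At}}_2 \leq 1$. The only cosmetic difference is that you aggregate over the whole dataset to get a general PSD matrix $A = (I-G)\Sigma(I-G)^\top$, whereas the paper works with a single sample so that the analogous matrix is the rank-one $\epsilon\epsilon^\top$; your version is a mild and correct generalization, and your remarks on monotonicity in $t$ and on the PSD structure not requiring $G$ to be orthogonal are accurate bonuses.
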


\begin{proof}

The optimization of the parameter matrix $W$ takes the form of $W^{t+1} = W^{t} - \alpha \nabla LI^{t}$, where $\nabla LI^t = W^{t}(x-Gx)(x-Gx)^{T}$ is a gradient of the logit invariance error (Equation \ref{eq:logit_inv}) at the time step $t$. 

Let $\epsilon = x - Gx$ and $\Sigma = \epsilon\epsilon^{T}$. With the infinitesimally small learning rate $\alpha$, we can write the gradient flow of $W$ as:

\begin{align}
    \frac{d}{dt} W = - W \Sigma
\end{align}

For a fixed $\Sigma$ we can solve the gradient flow above analytically as:

\begin{align}
    W(t) & = W(0) \exp{(-\Sigma t)}
\end{align}

Next, we consider a maximum singular value $\sigma_{\text{max}}(W) = \| W \|_2$, when the model is trained, i.e. $W(t)$ with $t \rightarrow \infty$. Applying Cauchy–Schwarz we can write:

\begin{align}
\label{eq:cauchy}
    \| W(t) \|_2 & \leq \| W(0) \|_2 \| \exp{(-\Sigma t)} \|_2
\end{align}

With a spectral decomposition $\Sigma = U \Lambda U^T$, we can write $\| \exp{(-\Sigma t)} \|_2 = \| \exp{(-\Lambda t)} \|_{2}$. Since $\Sigma$ is a rank-one matrix, it contains all zero eigenvalues except of the one, which equates to $\lambda_{\text{max}}(\Sigma) = \epsilon^T \epsilon$. Thus, eigenvalues of the matrix $\exp{(-\Lambda t)}$ are all ones except of the eigenvalue, which equates to $\lambda_{\epsilon}(t) = \exp(-t \cdot \epsilon^T \epsilon)$. Note that $\lambda_{\epsilon}(t) \leq 1$, hence $\| \exp{(-\Lambda t)} \|_{2} = 1$. Plugging into Equation \ref{eq:cauchy} gives $\| W(t) \|_2 \leq \| W(0) \|_2$ with $t \rightarrow \infty$.

\end{proof}

\textit{This reveals the non-increasing spectral norm constraint that invariance error minimization induces}. Also, initialization routines for $W$, e.g. \cite{glorot2010understanding, he2015delving}, yield small $\| W(0) \|_2 $ at the beginning of the training, further restricting the sensitivity of a network when optimizing for the low invariance.

\section{Experiments}
\label{sec:experiments}
In this section, we experimentally investigate the properties of learned group-invariance. As groups of interest we choose the $\mathbb{R}_{4}^{2}$ group of 4-fold rotations and the $\mathbb{T}_{3}^{2}$ group of 3-fold cyclic translations along the x-axis. We examine how well the learned invariance is aligned with the downstream task performance and the genuine invariance of weight-tying methods. Then, we analyze the reliability of the learned invariance under the data distribution drift. Lastly, we investigate the invariance-induced spectral decay phenomenon for various forms of the invariance error.

\subsection{Implementation details}

\begin{table}[t!]
\centering
\begin{tabular}{@{}cccccl@{}}
\toprule
\shade $\mathcal{G}$ \hspace{-2mm} & \shade Model \hspace{-2mm} & \shade Acc. (\%) & \shade $LI \Downarrow$  & \shade $DI \Downarrow$ & \shade $SI \Uparrow$ \\ \midrule
\multirow{3}{*}{$\mathbb{R}_4^2$} \hspace{-2mm} & \texttt{WT} \hspace{-2mm} & $94.6$ {\tiny$\pm0.1$} & $0.00$ {\tiny$\pm0.0$} & $0.0$ {\tiny$\pm0.0$} & $1.00$ {\tiny$\pm0.00$} \\
                   & \texttt{DA} \hspace{-2mm} & $94.0$ {\tiny$\pm0.5$} & $98.6$ {\tiny$\pm5.4$} & $0.3$ {\tiny$\pm0.1$} & $0.17$ {\tiny$\pm0.04$} \\
                   & \texttt{IR} \hspace{-2mm} &$87.9$ {\tiny$\pm0.8$} & $0.02$ {\tiny$\pm0.0$} & $0.0$ {\tiny$\pm0.0$} & $0.95$ {\tiny$\pm0.03$} \\ \midrule
\multirow{3}{*}{$\mathbb{T}_3^2$} \hspace{-2mm} & \texttt{WT} \hspace{-2mm} & $96.6$ {\tiny$\pm0.1$} & $0.0$ {\tiny$\pm0.0$} & $0.0$ {\tiny$\pm0.0$} & $1.00$ {\tiny$\pm0.0$}  \\
                   & \texttt{DA} \hspace{-2mm} & $96.2$ {\tiny$\pm0.2$} & $50.8$ {\tiny$\pm6.8$} & $0.1$ {\tiny$\pm0.0$} & $0.57$ {\tiny$\pm0.08$} \\
                   & \texttt{IR} \hspace{-2mm} & $93.1$ {\tiny$\pm0.1$} & $0.01$ {\tiny$\pm0.0$} & $0.0$ {\tiny$\pm0.0$} & $0.95$ {\tiny$\pm0.07$} \\ \bottomrule
\end{tabular}
\caption{Classification accuracy and invariance for the data augmentation [\texttt{DA}], weight-tying [\texttt{WT}], and the model trained with the logit invariance error as the regularizer [\texttt{IR}] on the Transformed-MNIST dataset. \textit{LI} - logit invariance; \textit{DI} - predictive distribution invariance; \textit{SI} - saliency invariance similarity.}
\label{tab:tabl1}
\end{table}

\paragraph{Datasets} We construct the \text{Transforming-MNIST} and \text{Transforming-FMNIST} datasets. Both dataset consist of MNIST and F-MNIST \cite{fashion2017xiao} ($28 \times 28$ black an white images of clothing categories) with $\mathbb{R}_{4}^{2}$ or $\mathbb{T}_{3}^{2}$ group transformations applied. We additionally leave out the digit $9$ from the \text{Transforming-MNIST} dataset to avoid the confusion with $6$, when studying the rotation invariance. We also leave out the last class of the \text{Transforming-FMNIST} to make number of classes equal to the \text{Transforming-MNIST}. We extend the resolution from $28 \times 28$ to $36 \times 36$ by zero-padding data samples. We use $10k / 50k / 2k$ splits for \textit{train} $/$ \textit{test} $/$ \textit{validation}. The datasets are normalized to zero mean and unit standard deviation. 

\paragraph{Models} We employ $5$-layer perceptron with \texttt{ReLU} non-linearities and the hidden dimension of $128$, resulting in total of $230k$ parameters. For the group-invariant model, we utilize group weight-tying with a pooling over a group to achieve the invariance. We only utilize group-invariant weight-tying for the first layer of a network.

\paragraph{Training details} We train all models for $300$ epochs using Adam optimizer with the batch size of $512$ and the learning rate of $0.0008$. For all models, we use data augmentation with transformations randomly sampled from a group of interest. For invariance regularization, we employ logit invariance error as a regularizer $\mathcal{I}_{f}$. We tune the weighting of the regularizer, such that the resulting saliency invariance similarity $SI_f \geq 0.95$. Final models are selected based on the best validation accuracy.

\paragraph{Saliency map function $\textbf{\textit{m}}_\textbf{\textit{f}}$} The choice of the saliency map function to compute saliency invariance score is a hyper-parameter. We employ the following procedure to generate saliency maps of a network. First, we accumulate absolute values of integrated gradients \cite{sundararajan2017axiomatic} with respect to the target class prediction. Second, we threshold the values that are less than $0.9$ of a maximum value of the absolute integrated gradient. Finally, we apply a Gaussian filter with a kernel size of $3$ and a standard deviation of $1$ to smooth the resulting saliency map.

\subsection{Learning invariance from the data}

In this experiment, we compare models that learn invariance with data augmentation [\texttt{DA}] and invariance regularization [\texttt{IR}], and models that have the invariant weight-tying built-in [\texttt{WT}]. We evaluate the models by the classification accuracy, the logit and distribution invariance errors, and also by the saliency invariance similarity. The results are reported in Table \ref{tab:tabl1}. All results are averaged over $4$ common random seeds. 

\begin{figure}[t!]
  \centering
    \includegraphics[width=0.8\linewidth]{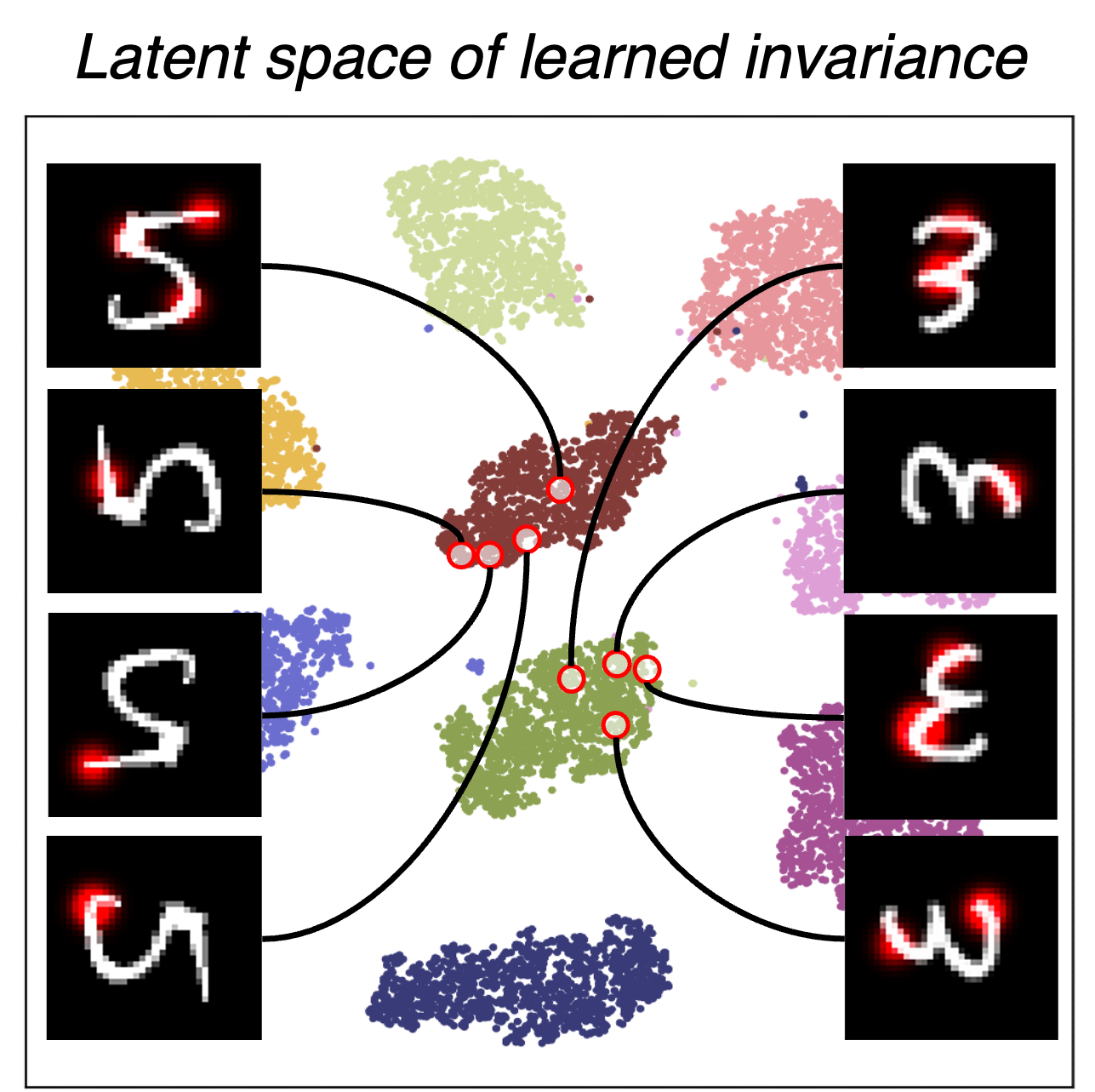}
    \caption{T-SNE of the representations in the pen-ultimate layer of the model with $\mathbb{R}_4^2$ learned invariance. Different orientations of a single sample are mapped to distant points in the latent space. The saliency of the network is highlighted by the red regions in the images.}
  \label{fig:tsne}
\end{figure}

\paragraph{\texttt{WT}} The networks with group-invariant weight tying deliver the highest classification accuracy in both $\mathbb{R}_4^2$ and $\mathbb{T}_3^2$ scenarios. We thus treat the weight-tying models as a performance upper-bound and an oracle for the genuine invariance when further analyzing models with invariance learned.

\paragraph{\texttt{DA}} We observe that the models trained with data augmentation fail to learn genuine group invariance as indicated by high logit invariance error $LI_f$ and lower saliency similarity score $SI_f$. Interestingly, these models still provide moderately low predictive distribution invariance error $DI_f$ and high classification accuracy under group transformations. This implies that \textit{neural networks can learn to solve an invariant task without learning a genuinely invariant decision making-process}.

To visualize this phenomenon, we depict the T-SNE of the latent space of the model with learned $\mathbb{R}_r^2$ invariance (Figure \ref{fig:tsne}), and we trace different orientations of a sample in the latent space. We observe that different orientations of one sample can land far away from each other in the representation space, but still within the boundaries of its class. When such configuration fully satisfies the downstream task objective, there is apparently no reason for a network to learn genuine invariance.

\begin{figure}[t!]
  \centering
    \includegraphics[width=1\linewidth]{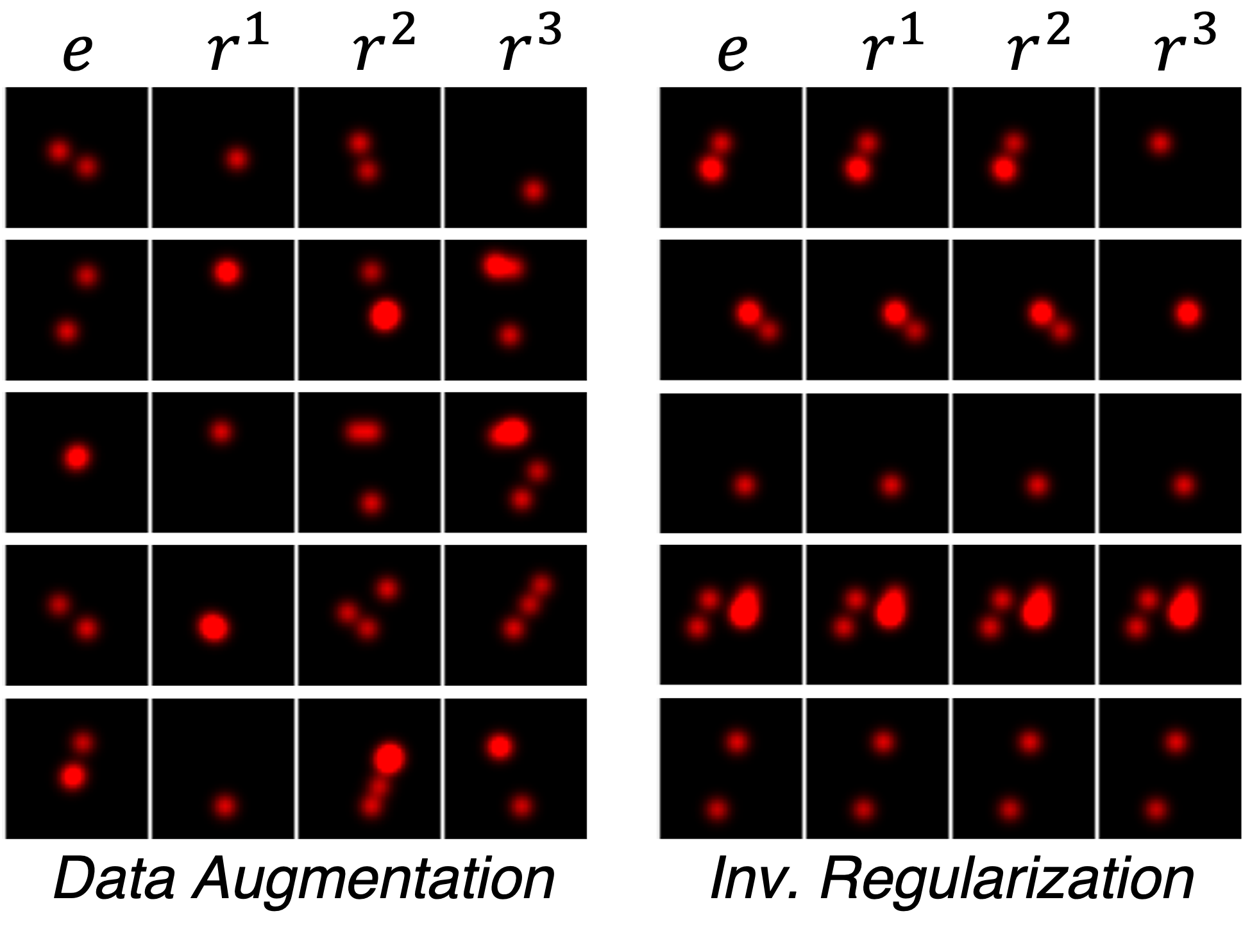}
    \caption{Saliency maps for the models with learned $\mathbb{R}_4^2$ invariance. Rows correspond to data samples and columns correspond to transformations from the group orbit applied to the sample. All saliency maps are realigned to a common orientation.}
  \label{fig:saliency}
\end{figure}

\paragraph{\texttt{IR}} The models trained with invariance regularization achieve low logit and distribution invariance errors on par with the weight-tying models. Also, high saliency invariance similarity indicates that invariance regularization guides a model towards learning genuinely invariance decision-making process. In Figure \ref{fig:saliency}, we visualize examples of saliency maps of the network with $\mathbb{R}_4^2$ invariance learned by data augmentation and invariance regularization. In contrast to the saliency maps of the model trained solely with data augmentation, saliency maps of the model with invariance regularization are well-aligned over the group orbit.

\subsection{Reliability of learned invariance}

We next investigate the reliability of the models with the learned invariance when operating conditions evolve. We simulate changing operating conditions as the data distribution drift from Transforming-MNIST to Transforming-FMNIST datasets. Practically, we linearly interpolate between those two dataset as $\mathcal{D}_{1 \rightarrow 2}(\beta) = (1-\beta) \mathcal{D}_1 + \beta \mathcal{D}_2$ to obtain the dataset with the drift degree of $\beta$. We compare the models by measuring the invariance error and the accuracy drop ratio on the drifted dataset. The accuracy drop ratio on the drifted dataset $\mathcal{D}_{1 \rightarrow 2}(\beta)$ is computed as $\text{Acc}(\mathcal{D}_{1 \rightarrow 2}(\beta)) / \text{Acc} (\mathcal{D}_{1})$, where $\text{Acc} (\mathcal{D})$ is the model's accuracy on the dataset $\mathcal{D}$. The accuracy ratio indicates how much of the original accuracy is preserved when a model is tested on the drifted dataset. The results are presented in Figure \ref{fig:drift}. 

\begin{figure}[t!]
  \centering
    \includegraphics[width=0.95\linewidth]{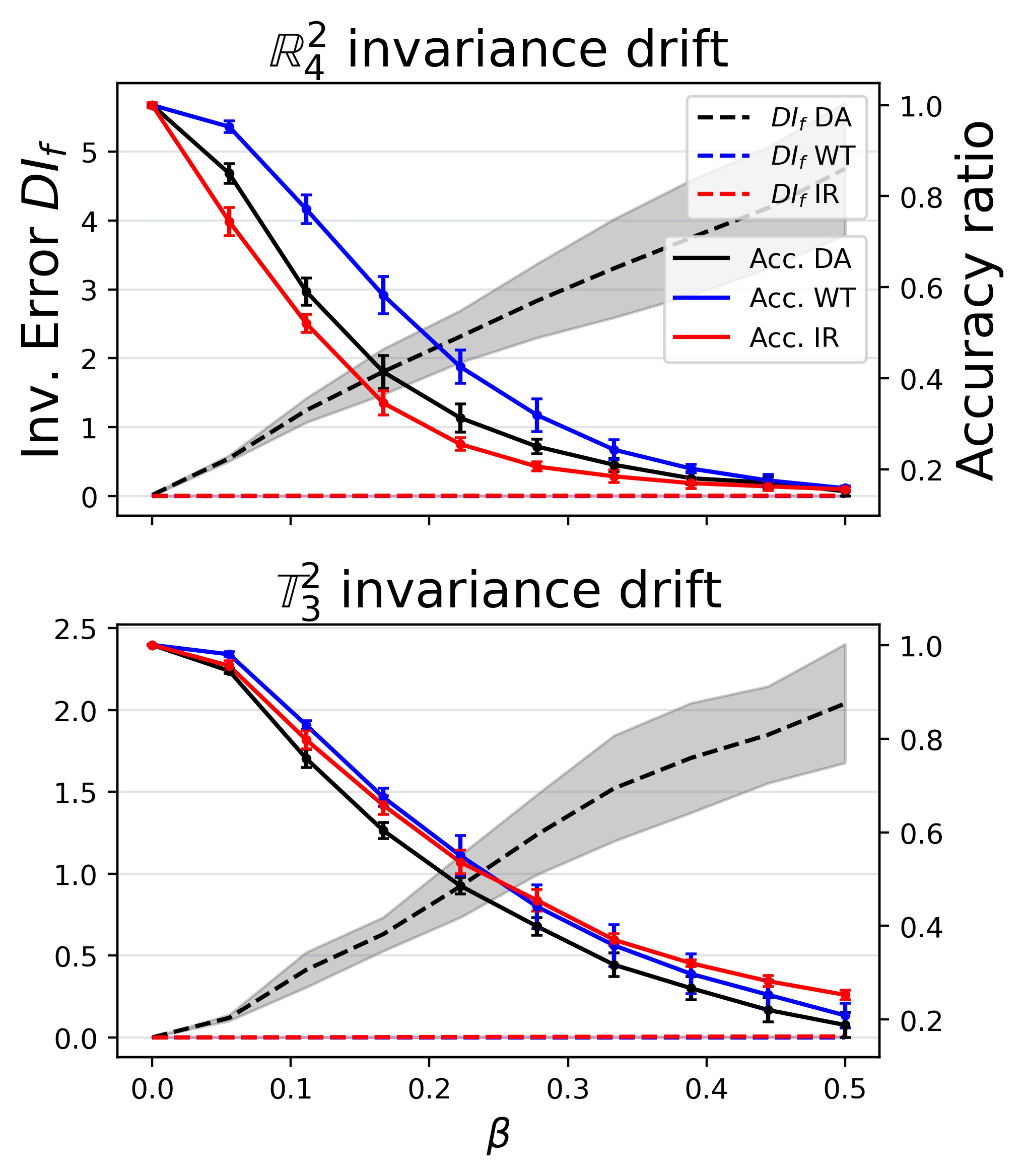}
    \caption{Predictive distribution invariance error $DI_f$ (left y-axis) and classification accuracy drop ratio (right y-axis) for the data augmentation, weight-tying and invariance regularization models over increasing degree of the data distribution drift (x-axis).}
  \label{fig:drift}
\end{figure}

\textit{We observe that the invariance learned by data augmentation deteriorates rapidly, even under a slight degree of data drift}. This turns into a major flaw if a user anticipates a certain level of invariance from the model, but then invariance instantly fails upon encountering unseen data. This also obscures the interpretability of predictions, thereby complicating the explainability of model decisions, even if accuracy is sustained. This yields invariance learned by data augmentation unreliable. \textit{Conversely, models with weight-tying and invariance regularization maintain low invariance error even under substantial distribution drift}.

Also, we observe that networks with invariant weight-tying sustain higher classification accuracy under the distribution shifts. This also holds for the models trained with invariance regularization for $\mathbb{T}^2_3$, but interestingly, not for the $\mathbb{R}^2_4$ invariance. We hypothesize this can be attributed to the \textit{accuracy-on-the-line} effect \cite{miller2021accuracy}, where models with higher in-domain accuracy tend to also deliver higher accuracy on out-of-distribution data. 

\begin{figure}[t!]
  \centering
    \includegraphics[width=0.95\linewidth]{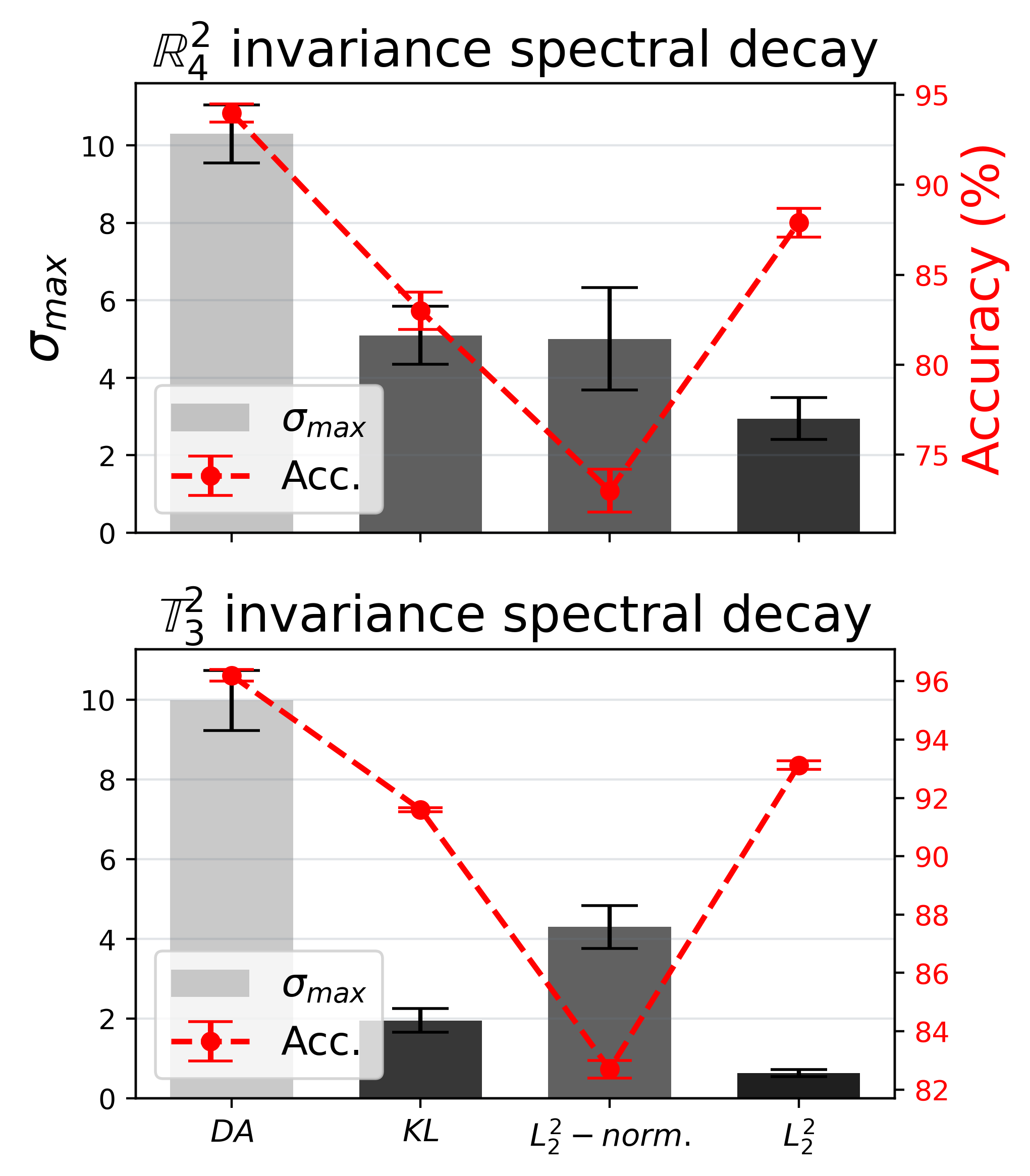}
    \caption{Sensitivity of a network to input perturbations measured by the maximum singular value of its Jacobian (left y-axis) for various forms of invariance regularization (x-axis). Models trained with invariance regularization come with overall reduced sensitivity to input perturbations.}
  \label{fig:sdecay}
\end{figure}

\subsection{Invariance-induced spectral decay}
\label{sec:sdecay}

Lastly, we take a closer look at the invariance-induced spectral decay and we verify if it holds for different forms of invariance regularization. We investigate invariance regularization with \textit{(i)} distribution invariance error with KL divergence, \textit{(ii)} logit invariance error with squared $L_2$ distance, and \textit{(iii)} logit invariance error with the squared $L_2$ distance normalized by the magnitude of the logits, i.e. $\| f(x) - f(gx) \|_2^2 / \| f(x) \|_2^2$. We tune the weighting of the regularizer for all of the models such that saliency invariance similarity $SI_f \geq 0.95$. We then evaluate the sensitivity of a network to input perturbations as a maximum singular value of its Jacobian $\sigma_{\text{max}}(J)$; and we compare it to the sensitivity of the networks trained solely with data augmentation. The results are presented in Figure \ref{fig:sdecay}.

We observe that models trained with invariance regularization come with overall reduced sensitivity to input perturbations as indicated by considerably smaller  $\sigma_{\text{max}}(J)$. This phenomenon holds for both $\mathbb{R}^2_4$ and $\mathbb{T}^2_3$ groups and various forms of invariance regularization. Note that all forms of the invariance regularization we examine also induce an accuracy drop for the model.
\section{Discussion}

\paragraph{Summary} Our study sheds light on the properties and limitations of data-driven invariance learning within neural networks. First, we proposed several measures to evaluate learned invariance: predictive distribution invariance and logit invariance errors, and saliency invariance similarity. With this, we study networks with learned group invariance and demonstrate that high performance and low invariance error do not guarantee a genuine invariant decision-making process. This leads to a notable risk, when learned invariance immediately fails beyond the training data distribution, making neural networks with learned invariance less reliable. Then, we showed that it is possible to promote genuine invariance learning by regularizing invariance during the training. Yet, such an approach leads to a spectral-decay phenomenon, when a network opts for reducing input sensitivity to all perturbations to achieve invariance to a specific group of transformations. These findings bring us a step closer to deciphering the intricate dynamics of learning inductive biases from the data.

\paragraph{Broader Impact} Our work, while primarily considering invariance to group symmetries, has potential implications for a much broader class of invariances. The increasing reliance on data-driven models, particularly in the era of large-scale machine learning, highlights the critical need to comprehend the properties of inductive biases that these models learn. Thus, understanding learned invariance, as one of the key inductive biases, becomes paramount for ensuring the fairness and interpretability of network's decisions.

\paragraph{Limitations and Future Work} While our study provides several key insights, some limitations remain. Firstly, the generalizability of our findings to other types of neural networks, other data modalities and other training regimes, e.g. self-supervised learning \cite{chen2020simple,MaskedAutoencoders2021,moskalev2022contrasting}, remains an interesting future direction to explore. Secondly, the way we design invariance regularization assumes a known group of transformations, which may not always be accessible in practice. Future work could look into methods for learning genuine invariance to unknown transformations without architectural modification. Lastly, our results also highlight a trade-off between learning the genuine invariance and the downstream task performance, opening a direction for future research into strategies for mitigating this trade-off.

\paragraph{Acknowledgements} We would like to thank Ivan Sosnovik and Volker Fischer for their helpful notes and discussions. This work has been financially supported by Bosch, the University of Amsterdam, and the allowance of Top consortia for Knowledge and Innovation (TKIs) from the Netherlands Ministry of Economic Affairs and Climate Policy. Artem Moskalev acknowledges travel support from the Video \& Image Sense (VIS) Lab of the University of Amsterdam.

\bibliographystyle{apalike}
\bibliography{main}



\end{document}